\newcommand{\E}{\mathrm{E}}
\renewcommand{\P}{\mathrm{P}}
\newcommand{\mbf}{\mathbf}
\newtheorem{theorem}{Theorem}
\newtheorem{lemma}{Lemma}
\newtheorem{corollary}{Corollary}
\newtheorem{definition}{Definition}
\newtheorem{remark}{Remark}
\newenvironment{proof}[1][]{\paragraph{Proof #1:}}{\hfill$\square$}
\begin{document}

\twocolumn[
\icmltitle{Making Convex Loss Functions Robust to Outliers using $e$-Exponentiated Transformation}

\icmlsetsymbol{equal}{}

\begin{icmlauthorlist}
\icmlauthor{Suvadeep Hajra}{equal,iitb}
\end{icmlauthorlist}

\icmlaffiliation{iitb}{Department of Computer Science and Engineering, Indian Institute of Technology Bombay, Mumbai, India}

\icmlcorrespondingauthor{Suvadeep Hajra}{suvadeep.hajra@gmail.com}

\icmlkeywords{loss function, robust classifier, label noise, deep neural network, generalization error bound, nonconvex loss function}

\vskip 0.3in
]

\printAffiliations{}

\begin{abstract}
In this paper, we propose a novel {\em $e$-exponentiated} transformation, $0 \le e<1$, for loss functions. When the transformation is applied to a convex loss function, the transformed loss function become more robust to outliers.
Using a novel generalization error bound, we have theoretically shown that the transformed loss function has a tighter bound for datasets corrupted by outliers. Our empirical observation shows that the accuracy obtained using the transformed loss function can be significantly better than the same obtained using the original loss function and comparable to that obtained by some other state of the art methods in the presence of label noise.
\end{abstract}

\section{Introduction}
\label{intro}
Convex loss functions are widely used in machine learning as their usage lead to convex optimization problem in a single layer neural network or in a kernel method. That, in turn, provides the theoretical guarantee of getting a globally optimum solution efficiently. However, many earlier studies have pointed out that convex loss functions are not robust to outliers \cite{LongS08,LongS10,DingV10,ManwaniS13,RooyenMW15,GhoshMS15}. Indeed a convex loss imposes a penalty which grows at least linearly with the negative margin for a wrongly classified example, thus making the classification hyperplane greatly impacted by the outliers. Consequently, nonconvex loss functions have been widely studied as a robust alternative to convex loss function \cite{Masnadi-ShiraziV08,LongS10,DingV10,DenchevDVN12,ManwaniS13,GhoshMS15}. 

In this paper, we propose $e$-exponentiated transformation for loss function to make a convex loss functions more robust to outliers. Given a convex loss function $l(\hat{y}, y)$, we define it's $e$-exponentiated transformation to be $l^{e,c}(\hat{y},y)=l(\sigma^{e,c}(\hat{y}),y)$ for $0 \le e< 1$ and some real positive constant $c$ where $\sigma^{e,c}(\hat{y})$ is given by
\begin{align}
 \sigma^{e,c}(\hat{y}) = \left\{
                          \begin{array}{ll}
                            sgn(\hat{y})|\hat{y}|^e     & \text{ if }|\hat{y}|\ge c\\
                            c^{e-1}\hat{y} & \text{ otherwise}
                          \end{array}\right. \label{eq:sigma}
\end{align}
with $|\hat{y}|$ denoting the absolute value of $\hat{y}\in\mathbb{R}$ and the sign function $sgn(\hat{y})$ defined to be equal to $1$ for $\hat{y}\ge 0$, $-1$ otherwise. For a differentiable convex loss function $l(\cdot, \cdot)$, its $e$-exponentiated transformation $l^{e,c}(\cdot,\cdot)$ is differentiable everywhere except at $\hat{y}\in\{-c, c\}$. Thus, a gradient based optimization algorithm can be used for empirical risk minimization with $e$-exponentiated loss function. Moreover, an $e$-exponentiated loss function $l(\hat{y},y)$ is more robust to outliers than the corresponding convex loss function $l(\hat{y},y)$ as the slope $|\frac{d}{d\hat{y}}l^{e,c}(\hat{y},y)|=e|\hat{y}|^{e-1}|\frac{d}{d\sigma^{e,c}(\hat{y})}l(\sigma^{e,c}(\hat{y}),y)|< |\frac{d}{d\hat{y}}l(\hat{y},y)|$ for $\hat{y}< -1$ (please refer to Figure~\ref{fig:loss}).
\begin{figure}[h!]
\vskip 0.2in
\begin{center}
\centerline{\includegraphics[width=\columnwidth]{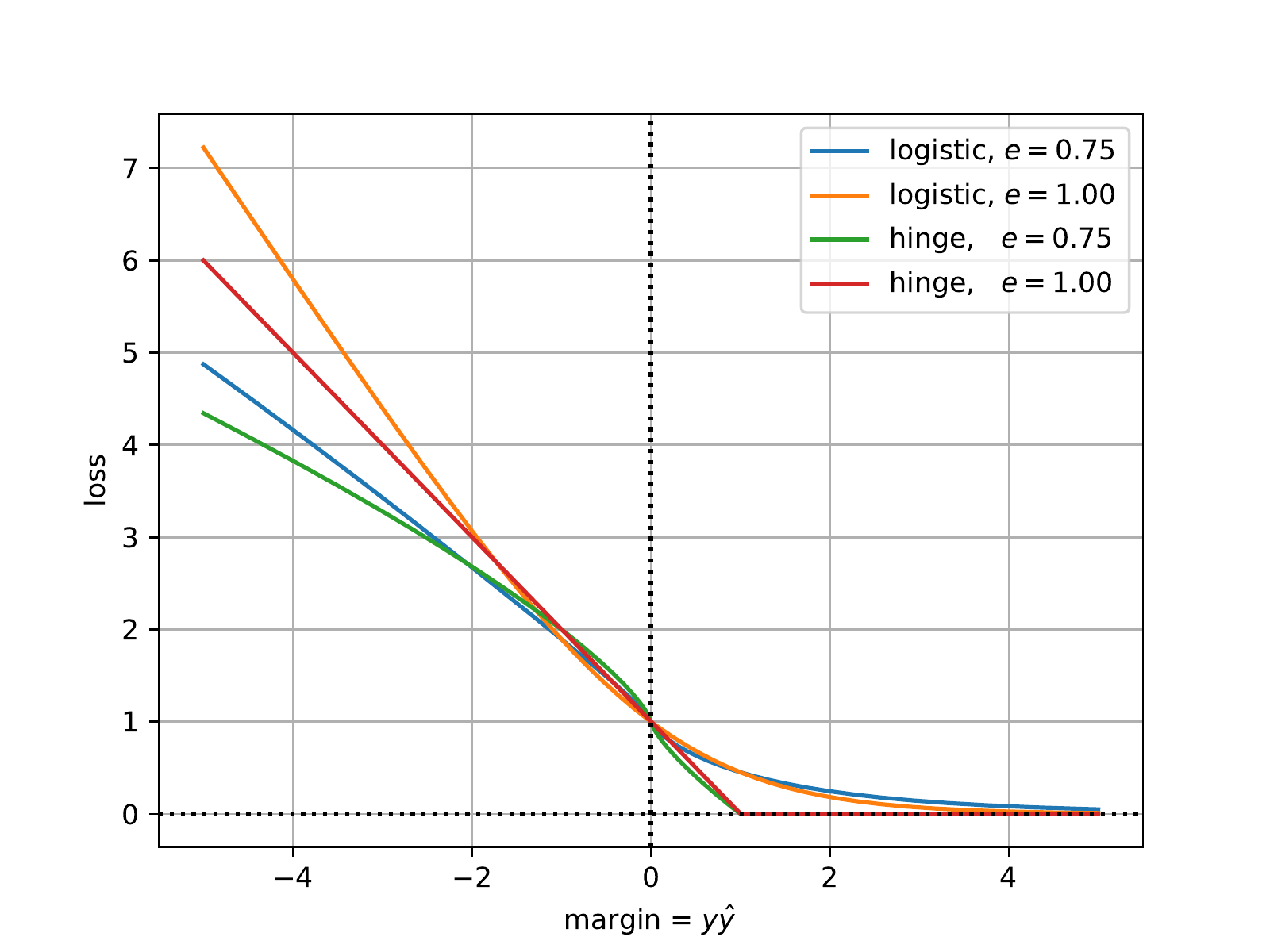}}
\caption{$e$-exponentiated transformation of logistic and hinge loss. $e=1.0$ implies the original loss. We have used $c=0.005$ in all the plots.}
\label{fig:loss}
\end{center}
\vskip -0.2in
\end{figure}

Additionally, by introducing a novel generalization error bound, we show that the bound for an $e$-exponentiated loss function can be tighter than the corresponding convex loss function. Unlike existing generalization error bounds \cite{rosasco04} which strongly depends on the Lipschitz constant of a loss function, our derived bound depends on the Lipschitz constant only weakly. Consequently, even having a larger Lipschitz constant for an $e$-exponentiated loss function compared to the corresponding convex loss function, the bound can be tighter.

In summary, the contributions of the paper are as follows:
\begin{enumerate}
 \item In this paper, we propose an $e$-exponentiated transformation of convex loss function. The proposed transformation can make a convex loss function more robust to outliers.
 \item Using a novel generalization error bound, we show that the bound for an $e$-exponentiated loss function can be tighter than the corresponding convex loss function. Our derived bound only weakly depends on the Lipschitz constant of a loss function. Consequently, our bound for a loss function can be tighter in spite of having a larger Lipschitz constant.
 \item We have empirically verified the accuracy obtained by our proposed $e$-exponentiated loss functions on several datasets. The results show that we can get significantly better accuracies using the $e$-exponentiated loss function than that obtained by the corresponding convex loss and comparable accuracies to that obtained by some other state of the art methods in the presence of label noise.
\end{enumerate}

The organization of the work is as follows. In Section~\ref{sec:emp_risk_min}, we have formally introduced the empirical risk minimization problem. Section~\ref{sec:error_bound} derives a novel generalization error bound. Using the bound, we have also shown that the bound can be tighter for $e$-exponentiated loss function. In Section~\ref{sec:exp_results}, we have shown our experimental result. Finally, Section~\ref{sec:con} concludes the work.

\section{Empirical Risk Minimization Using $e$-Exponentiated Loss}\label{sec:emp_risk_min}
We consider the empirical risk minimization of a linear classifier with $e$-exponentiated loss function for a binary classification problem. Given a convex loss function $l(\cdot, \cdot)$, the empirical risk minimization of a linear classifier is given by:
\begin{small}
\begin{align}
 \hat{\mathrm{R}}_l(\mbf{w};\mathcal{D}) = \frac{1}{N}\sum_{i=1}^N l(\hat{y}_i, y_i) = \frac{1}{N}\sum_{i=1}^Nl(\mbf{w}^T\phi(\mbf{x}_i), y_i) \label{eq:emp_risk}
\end{align}
\end{small}
where $\mathcal{D}=\{(\mbf{x}_i, y_i)\}_{i=1}^N$ is the training set, $\phi(\mbf{x})\in \mathbb{R}^d$ is the feature representation of the sample $\mbf{x}$ and the target $y_i$s takes a value from $\{1, -1\}$ for $i\in \{1, \cdots, N\}$. The corresponding empirical risk with $e$-exponentiated loss $l^{e,c}(\cdot, \cdot)$ is given by:
\begin{footnotesize}
\begin{align}
 \hat{\mathrm{R}}_{l^{e,c}}(\mbf{w};\mathcal{D}) &= \frac{1}{N}\sum_{i=1}^N l^{e,c}(\hat{y}_i, y_i) = \frac{1}{N}\sum_{i=1}^N l(\sigma^{e,c}(\hat{y}_i), y_i) \nonumber\\
 &= \frac{1}{N}\sum_{i=1}^Nl(\sigma^{e,c}(\mbf{w}^T\phi(\mbf{x}_i)), y_i)
\end{align}
\end{footnotesize}
where $e\in [0, 1)$, $c> 0$ and $\sigma^{e,c}(\cdot)$ is as defined in Eq.~\eqref{eq:sigma}. In the rest of the paper, we will ignore the second argument of $\hat{\mathrm{R}}_l(\mbf{w};\mathcal{D})$ and $\hat{\mathrm{R}}_{l^{e,c}}(\mbf{w};\mathcal{D})$ whenever $\mathcal{D}$ can be inferred from the context.

\section{Generalization Error Bounds of Empirical Risk Minimization with $e$-Exponentiated Loss}\label{sec:error_bound}
In this section, we present an upper bound for the generalization error incurred by an $e$-exponentiated loss function. Towards this end, we first propose a novel method for estimating the upper bound. Our introduced method of generalization error bound captures the average behaviour of a loss function as opposed to other existing methods \cite{rosasco04} which captures the worst case behaviour. More particularly, our method is more suitable for analysing nonconvex problems where the risk function is smooth in most of the regions but contains some very low probable high gradient regions. Consequently, our bound shows an weak dependence on the Lipschitz constant of the loss functions as opposed to other existing methods \cite{rosasco04} which depend on the Lipschitz constant monotonically. Finally, applying the derived bound, we show that empirical risk minimization with $e$-exponentiated loss function can have tighter generalization error bound than that can be obtained using the corresponding convex loss function.

\subsection{Upper Bound for the Generalization Error}
The gradient of an $e$-exponentiated loss function can be very large (in the order of $c^{e-1}\times L_l$ where $L_l$ is the Lipschitz constant of the corresponding convex loss) making the Lipschitz constant of the transformed loss very large for $c\ll 1$. On the other hand, the existing generalization error bound gets loose as the Lipschitz constant gets larger. To overcome this issue, we propose a novel bound for the same. Our bound is based on the work of \cite{rosasco04}. Before stating our bound, let us introduce certain notations and definitions.

\begin{definition}
 A function $f:A\mapsto\mathbb{R}$, $\mathcal{A}\subseteq\mathbb{R}^n$ is said to be $L_f$-Lipschitz continuous, $L_f>0$, if 
 \begin{align}
  |f(\mbf{a})-f(\mbf{b})| \le L_f||\mbf{a}-\mbf{b}||_2
 \end{align}
for every $\mbf{a}, \mbf{b}\in \mathcal{A}$.
\end{definition}

\begin{definition}
 A function $f:A\mapsto\mathbb{R}$, $\mathcal{A}\subseteq\mathbb{R}^n$, is said to be {\em Lipschitz in the small} continuous, if there exists $\epsilon>0$ and $L_f(\epsilon)>0$ such that
 \begin{small}
 \begin{align}
  ||\mbf{a}-\mbf{b}||_2 \le \epsilon \;\text{implies }\; |f(\mbf{a})-f(\mbf{b})|\le L_f(\epsilon)||\mbf{a}-\mbf{b}||_2\label{eq:def_ls}
 \end{align}
 \end{small}
for every $\mbf{a}, \mbf{b}\in \mathcal{A}$.
\end{definition}
\noindent
Note that, in general, whenever a function $f(\mbf{x})$ is continuous and differentiable, $L_f\ge L_f(\epsilon) \ge \text{sup}_{\mbf{x}}\; |f'(\mbf{x})|=L_f$ for all $\epsilon>0$ where $f'(\mbf{x})$ is the gradient of $f(\mbf{x})$ at $\mbf{x}$. However, this might not be true when the function $f(\mbf{x})$ also depends on the distribution of the input $\mbf{x}$.

With the above definitions, we state our generalization error bound in the next theorem. Note that since a close ball in $\mathbb{R}^d$ defined as $\mathcal{W}_M\triangleq\{\mbf{w}\in \mathbb{R}^d|\;||\mbf{w}||_2\le M\}$ is a compact set, we can cover the set by taking union of a finite number of balls of radius $\epsilon$ for any $\epsilon>0$. Let us denote the covering number of $\mathcal{W}_M$ by $C(\epsilon)$. Also, we define the expected risk corresponding to the empirical risk given by Eq.~\eqref{eq:emp_risk}
\begin{align}
 \mathrm{R}_l(\mbf{w}) = \E_{\mbf{x},y}[l(\mbf{w}^T\phi(\mbf{x}), y)]
\end{align}
where $\E_{\mbf{x},y}[\cdot]$ denotes expectation over the joint distribution of $\mbf{x}$ and $y$. Also note that so far we have used the notation $l(\cdot, \cdot)$ to represent a convex loss function. However, in this section, we use the notation to represent any arbitrary loss function. With the above definitions and notations, we state our generalization error bound in the following theorem.

\begin{theorem}\label{thm:2}
 Let $\mathcal{D}_N = (\mbf{x}_i, y_i)_{i=1}^N$ such that $\phi(\mbf{x})_i\in\{\phi(\mbf{x})\in\mathbb{R}^d\;|\;||\phi(\mbf{x})||_2\le 1\}$, and $y_i\in\{-1, +1\}$. Let $\mbf{w}\in \mathcal{W}_M\triangleq\{\mbf{w}\in \mathbb{R}^d|\;||\mbf{w}||_2\le M\}$ with $M\ge 1$. Let the loss function $l(\cdot, \cdot)$ is $L_l$-Lipschitz continuous. Set $B=L_{\mathrm{R}_l}(M)M+C_l$ where $L_{\mathrm{R}_l}(\epsilon)$ is as defined in Eq.~\eqref{eq:def_ls} and $C_l>0$ such that $C_l\ge l(0, y)$ for $y\in\{-1, +1\}$. Then for all $\epsilon>0$, we have
 \begin{smaller}
 \begin{align}
  &\P\left(\left\{\mathcal{D}_N\mathlarger{\mathlarger{|}}\underset{\mbf{w}\in \mathcal{W}_M}{sup}|\mathrm{R}_{l}(\mbf{w})-\hat{\mathrm{R}}_{l}(\mbf{w};\mathcal{D}_N)|\le \epsilon+\frac{L_l\epsilon^2}{2B}\right\}\right) \nonumber\\
  &\ge 1 - 2\left(C\left(\frac{\epsilon}{4L_{\mathrm{R}_l}(\epsilon')}\right)+1\right)exp\left(-\frac{N\epsilon^2}{8B^2}\right).\label{eq:thm_2}
 \end{align}
 \end{smaller}
 where $\epsilon'>0$ such that $\epsilon'\ge \text{min }\left\{\epsilon,\frac{\epsilon}{4L_{\mathrm{R}_l}(\epsilon')}\right\}$. (and this always exists).
\end{theorem}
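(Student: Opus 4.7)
The plan is to follow the classical covering-number-plus-union-bound scheme, but exploit the distinction between $L_l$ (global Lipschitz of $l$) and $L_{\mathrm{R}_l}(\cdot)$ (Lipschitz-in-the-small of the expected risk $\mathrm{R}_l$) so that only the latter controls the size of the cover while the former enters only through a second-order correction.

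First I would fix the cover radius $\epsilon_1 = \epsilon/(4L_{\mathrm{R}_l}(\epsilon'))$. The condition $\epsilon' \ge \min\{\epsilon,\epsilon/(4L_{\mathrm{R}_l}(\epsilon'))\}$ ensures $\epsilon_1 \le \epsilon'$, so the Lipschitz-in-the-small inequality applies inside each ball of the cover. Cover $\mathcal{W}_M$ by $C(\epsilon_1)$ balls of radius $\epsilon_1$ with centers $\mathbf{w}_1,\ldots,\mathbf{w}_{C(\epsilon_1)}$ and adjoin the origin $\mathbf{0}$ as one more reference point; the origin accounts for the ``$+1$'' in the prefactor of the probability bound and is needed to certify the mean bound $\mathrm{R}_l(\mathbf{w})\le B$ via $|\mathrm{R}_l(\mathbf{w})-\mathrm{R}_l(\mathbf{0})| \le L_{\mathrm{R}_l}(M)M$ together with $\mathrm{R}_l(\mathbf{0})\le C_l$.

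Next I would apply a concentration inequality at each center $\mathbf{w}_j$ to bound $|\mathrm{R}_l(\mathbf{w}_j)-\hat{\mathrm{R}}_l(\mathbf{w}_j;\mathcal{D}_N)|$, and union-bound over the $C(\epsilon_1)+1$ points. For an arbitrary $\mathbf{w}\in\mathcal{W}_M$ I would pick a nearest center $\mathbf{w}_j$ and split
\[
|\mathrm{R}_l(\mathbf{w})-\hat{\mathrm{R}}_l(\mathbf{w})|
\;\le\; |\mathrm{R}_l(\mathbf{w})-\mathrm{R}_l(\mathbf{w}_j)|
+ |\mathrm{R}_l(\mathbf{w}_j)-\hat{\mathrm{R}}_l(\mathbf{w}_j)|
+ |\hat{\mathrm{R}}_l(\mathbf{w}_j)-\hat{\mathrm{R}}_l(\mathbf{w})|.
\]
The first summand is at most $L_{\mathrm{R}_l}(\epsilon')\epsilon_1=\epsilon/4$ by Lipschitz-in-the-small; the third is at most $L_l\epsilon_1$ because $\hat{\mathrm{R}}_l$ inherits the full Lipschitz constant $L_l$ from $l$ and the bound $\|\phi(\mathbf{x}_i)\|_2\le 1$. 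The middle summand is absorbed by the concentration step.

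The crux, and the step I expect to be the main obstacle, is calibrating the concentration at each $\mathbf{w}_j$ so that the exponent scales as $-N\epsilon^2/(8B^2)$ even though per-sample loss values may be as large as $L_lM+C_l\gg B$. The natural route is a Bernstein/Bennett-style inequality using the variance bound derived from $\mathrm{R}_l(\mathbf{w}_j)\le B$ and non-negativity of $l$, together with a range bound $L_lM+C_l$; balancing the variance regime against the range regime (or, equivalently, truncating the loss at a level of order $B$ and paying for the residual) produces the additive correction $L_l\epsilon^2/(2B)$ on the event side. Combining this concentration inequality, the two Lipschitz-extension terms, the union bound over the $C(\epsilon_1)+1$ centers, and the choice of $\epsilon_1$, yields the stated inequality. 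The payoff of this route, as foreshadowed in the preceding discussion, is that $L_l$ appears only multiplied by $\epsilon^2/B$ rather than inside the exponent, which is exactly what makes the bound useful when the $e$-exponentiated transformation inflates $L_l$ by a factor of $c^{e-1}$.
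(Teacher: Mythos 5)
There is a genuine gap, and it sits exactly where the theorem's novelty lies. In your three-term split you bound the empirical oscillation $|\hat{\mathrm{R}}_l(\mathbf{w})-\hat{\mathrm{R}}_l(\mathbf{w}_j)|$ \emph{deterministically} by $L_l\epsilon_1 = L_l\epsilon/(4L_{\mathrm{R}_l}(\epsilon'))$. In the regime the theorem is built for ($L_l\gg L_{\mathrm{R}_l}(\epsilon')$, e.g.\ $L_l\sim c^{e-1}L$ with $c\ll 1$), this single term is of order $(L_l/L_{\mathrm{R}_l}(\epsilon'))\,\epsilon/4$, which dwarfs the permitted slack $\epsilon+L_l\epsilon^2/(2B)$: the ratio of your term to the correction $L_l\epsilon^2/(2B)$ is $B/(2\epsilon L_{\mathrm{R}_l}(\epsilon'))\gtrsim M/(2\epsilon)\gg 1$ for small $\epsilon$. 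So the decomposition as written cannot close, and no calibration of the concentration step at the centers can repair it, because the offending term is not a concentration term at all --- you have reintroduced the full Lipschitz constant into the covering argument, which is precisely what the theorem is designed to avoid. Relatedly, the mechanism you propose for producing $L_l\epsilon^2/(2B)$ (a Bernstein/truncation argument at each center $\mathbf{w}_j$) is not where that term comes from, and you yourself flag it as unresolved.

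The paper's fix is to treat the oscillation term probabilistically rather than deterministically (Lemma~\ref{lm:2}): for $\|\mathbf{w}_1-\mathbf{w}_2\|_2\le\epsilon'$, the per-sample differences $z_i=l(\mathbf{w}_1^T\phi(\mathbf{x}_i),y_i)-l(\mathbf{w}_2^T\phi(\mathbf{x}_i),y_i)$ have mean $\mathrm{R}_l(\mathbf{w}_1)-\mathrm{R}_l(\mathbf{w}_2)$, small by Lipschitz-in-the-small of the \emph{expected} risk, and range $|z_i|\le L_l\epsilon'$, small because the points are close. Hoeffding applied to the $z_i$ gives $|\hat{\mathrm{R}}_l(\mathbf{w}_1)-\hat{\mathrm{R}}_l(\mathbf{w}_2)|\le L_{\mathrm{R}_l}(\epsilon')\|\mathbf{w}_1-\mathbf{w}_2\|_2+\rho$ with failure probability $2\exp(-N\rho^2/(2L_l^2\epsilon'^2))$; choosing $\rho=L_l\epsilon'\epsilon/B$ makes this equal to the $2\exp(-N\epsilon^2/(2B^2))$ already paid at the cover centers, and this $\rho$ is exactly where the additive correction $L_l\epsilon^2/(2B)$ originates (after the final rescaling $\epsilon\mapsto\epsilon/2$ and using $\epsilon'\le\epsilon$). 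Intersecting this extra event with the union over the $m=C(\epsilon/(4L_{\mathrm{R}_l}(\epsilon')))$ centers is also what produces the ``$+1$'' in the prefactor, via $(1-2e^{-\cdot})(1-2me^{-\cdot})\ge 1-2(m+1)e^{-\cdot}$ --- not an extra cover point at the origin as you suggest. The concentration at the centers themselves is handled exactly as in Rosasco et al.\ by plain Hoeffding with the bound $B$ on the risk (your concern about per-sample losses exceeding $B$ is a legitimate one, but the paper does not resolve it with Bernstein, and in any case the term $L_l\epsilon^2/(2B)$ does not arise there).
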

Proof of Theorem~\ref{thm:2} has been skipped to Appendix~\ref{app:2}. To compare our result with the previous result, we state the result of \cite{rosasco04} in the next theorem:
\begin{theorem}\cite{rosasco04}\label{thm:3}
   Let $\mathcal{D}_N$, $M$, $\mathcal{W}_M$, $L_l$ and $C_l$ are as defined in Theorem~\ref{thm:2}. Set $B=L_lM+C_l$. Then for all $\epsilon>0$, we have
 \begin{align}
  &\P\left(\left\{\mathcal{D}_N\,\mathlarger{\mathlarger{|}}\;\underset{\bf{w}\in \mathcal{W}_M}{sup}\left|\mathrm{R}_l(\mbf{w})-\hat{\mathrm{R}}_l(\mbf{w};\mathcal{D}_N)\right|\le \epsilon\right\}\right) \nonumber\\
  &\ge 1 - 2C\left(\frac{\epsilon}{4L_l}\right)exp\left(-\frac{N\epsilon^2}{8B^2}\right).\label{eq:thm_3}
 \end{align}
\end{theorem}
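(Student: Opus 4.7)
The plan is to carry out the standard covering-plus-Hoeffding argument that the statement has been formatted for. First I would establish uniform boundedness of the loss: for any $\mbf{w}\in\mathcal{W}_M$ and any sample $(\mbf{x},y)$, by the triangle inequality and the $L_l$-Lipschitz continuity of $l(\cdot,y)$,
\[
|l(\mbf{w}^T\phi(\mbf{x}),y)| \le L_l\,|\mbf{w}^T\phi(\mbf{x})|+|l(0,y)| \le L_l M + C_l = B,
\]
using $\|\mbf{w}\|_2\le M$, $\|\phi(\mbf{x})\|_2\le 1$, and $l(0,y)\le C_l$. Consequently, each summand defining $\hat{\mathrm{R}}_l(\mbf{w})$ lies in $[-B,B]$, an interval of width $2B$.

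Next, I would fix a minimal cover $\{\mbf{w}_1,\ldots,\mbf{w}_K\}$ of $\mathcal{W}_M$ at radius $\epsilon/(4L_l)$, so $K=C(\epsilon/(4L_l))$. For each fixed center $\mbf{w}_j$ the random variables $l(\mbf{w}_j^T\phi(\mbf{x}_i),y_i)$, $i=1,\ldots,N$, are i.i.d.\ and lie in $[-B,B]$, so Hoeffding's inequality gives
\[
\P\bigl(|\mathrm{R}_l(\mbf{w}_j)-\hat{\mathrm{R}}_l(\mbf{w}_j;\mathcal{D}_N)|\ge \epsilon/2\bigr) \le 2\exp\!\bigl(-N\epsilon^2/(8B^2)\bigr),
\]
where the $8$ comes from $2N(\epsilon/2)^2/(2B)^2$. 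A union bound over the $K$ centers then yields the overall failure probability $2C(\epsilon/(4L_l))\exp(-N\epsilon^2/(8B^2))$, matching the right-hand side of the stated inequality.

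The final step is to extend the control at the cover points to all of $\mathcal{W}_M$ via a Lipschitz discretization argument. For any $\mbf{w}\in\mathcal{W}_M$, choose the nearest center $\mbf{w}_j$, so $\|\mbf{w}-\mbf{w}_j\|_2\le \epsilon/(4L_l)$. Using $\|\phi(\mbf{x})\|_2\le 1$ and Lipschitz continuity of $l$ in its first argument,
\[
|l(\mbf{w}^T\phi(\mbf{x}),y)-l(\mbf{w}_j^T\phi(\mbf{x}),y)|\le L_l\|\mbf{w}-\mbf{w}_j\|_2\le \epsilon/4.
\]
Taking expectations over $(\mbf{x},y)$ and empirical averages over $\mathcal{D}_N$ respectively yields $|\mathrm{R}_l(\mbf{w})-\mathrm{R}_l(\mbf{w}_j)|\le\epsilon/4$ and $|\hat{\mathrm{R}}_l(\mbf{w})-\hat{\mathrm{R}}_l(\mbf{w}_j)|\le\epsilon/4$. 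Consequently, on the good event where every cover point has deviation at most $\epsilon/2$, every $\mbf{w}\in\mathcal{W}_M$ has deviation at most $\epsilon/2+\epsilon/4+\epsilon/4=\epsilon$, which is exactly the statement.

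There is no serious conceptual obstacle; the work is bookkeeping on scales, making sure that the cover radius $\epsilon/(4L_l)$ and the Hoeffding scale $\epsilon/2$ combine to $\epsilon$, and that the $2B$ range length (rather than $B$) produces the $8B^2$ in Hoeffding's exponent. A minor subtlety is that the Lipschitz step requires the loss to be Lipschitz in its first argument with the same constant $L_l$ uniformly in $y$, which the definition of $L_l$-Lipschitz continuity of $l(\cdot,\cdot)$ stated in the excerpt already encodes.
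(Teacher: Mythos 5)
The paper does not actually prove Theorem~\ref{thm:3}: it is imported verbatim from \cite{rosasco04} as a point of comparison, and only Theorem~\ref{thm:2} is proved in the appendix. Your argument is nonetheless correct and is the standard proof of this result: the boundedness step $|l(\mbf{w}^T\phi(\mbf{x}),y)|\le L_lM+C_l=B$, Hoeffding on $[-B,B]$ at scale $\epsilon/2$ (giving the $8B^2$), a union bound over the $C(\epsilon/(4L_l))$ cover centers, and the $\epsilon/4+\epsilon/4$ Lipschitz discretization all combine to reproduce the stated constants exactly. It is worth noting that this is precisely the template the paper's own proof of Theorem~\ref{thm:2} modifies: there the deterministic Lipschitz transfer from $\mbf{w}$ to the nearest cover center is replaced by the ``Lipschitz in the small'' constant $L_{\mathrm{R}_l}(\epsilon')$ for the expected risk together with an extra Hoeffding bound (Lemma~\ref{lm:2}) for the empirical part, which is what removes the direct dependence of the covering radius on $L_l$.
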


\begin{remark}
 The confidence bound in the RHS of Eq.~\eqref{eq:thm_3} involves $L_l$, the Lipschitz constant of the loss function. Thus, the bound is a monotonically decreasing function of $L_l$ i.e. it gets worse as $L_l$ gets larger. On the other hand, the confidence bound of Eq.~\eqref{eq:thm_2} no more involve the Lipschitz constant of the loss function $L_l$. Instead, it involves $L_{\mathrm{R}_l}(\epsilon)$ which can be reasonably small even when $L_l$ is very large.
 \end{remark}

\begin{remark}
 By comparing Eq.\eqref{eq:thm_2} and \eqref{eq:thm_3}, we see that there are two main differences. First, in LHS of Eq.~\eqref{eq:thm_2}, $\epsilon$ has been replaced by a slightly larger quantity $\epsilon+L_l\epsilon^2/2B$. Since we generally take $\epsilon \ll 1$ and $B\ge 1$, $L_l\epsilon^2/2B$ can be a negligible quantity even for reasonably large $L_l$. Thus, it does not compromise the error bound significantly. Secondly, in RHS Eq.~\eqref{eq:thm_2}, $C(\epsilon/4L_l)$ has been replaced by $C(\epsilon/4L_{\mathrm{R}_l}(\epsilon'))+1$. Since for $x\ll 1$, the covering number $C(x)\gg 1$, Eq.~\eqref{eq:thm_2} also does not compromise the confidence probability significantly. Moreover, if $L_{\mathrm{R}_l}(\epsilon')$ is reasonably smaller than  $L_l$, the confidence bound given by Eq.~\eqref{eq:thm_2} can be significantly better than that given by Eq.~\eqref{eq:thm_3}.
\end{remark}

\begin{remark}
 For the nonconvex problem where the risk is smooth on most of the regions in its domain but has very high gradient on some very low probable regions, the bound given by Theorem~\ref{thm:3} can be very loose as the corresponding Lipschitz constant can be very large. However, Theorem~\ref{thm:2} can still provides a tight bound under proper distributional assumption. Thus, Theorem~\ref{thm:2} is better suitable for analysing nonconvex problems.
\end{remark}

\subsection{Comparison of Generalization Error Bound}
From Theorem~\ref{thm:2}, we see that when $L_l\epsilon/2B\ll 1$, the generalization error bound is a monotonically decreasing function of $L_{\mathrm{R}_l}(\epsilon)$ where $l(\cdot, \cdot)$ is the loss function used in the empirical risk minimization. Thus, to compare the generalization error bound of an $e$-exponentiated loss function with that of the corresponding convex loss function, we compare $L_{\mathrm{R}_l}(\epsilon)$ with $L_{\mathrm{R}_{l^{e,c}}}(\epsilon)$ where $l(\cdot, \cdot)$ is a convex loss function and $l^{e,c}(\cdot, \cdot)$ is its $e$-exponentiated transformation. Since $L_{\mathrm{R}_{l}}(\epsilon)$ depends on the distribution $\mbf{x}$ and $y$, we assume that the margin $y\hat{y}=y\mbf{w}^T\phi(\mbf{x})$ follows an uniform distribution. Moreover, since by our previous assumptions, $||\phi(\mbf{x})||_2\le 1$ and $||\mbf{w}||_2\le M$, $|y\hat{y}|\le M$. Note that in this case,
\begin{align*}
L_{\mathrm{R}_{l^{e,c}}}(\epsilon)=L_{\mathrm{R}_{l^{e,c}}}(M)=\underset{||\mbf{w}||_2\le M}{\text{sup}}\;||\frac{d}{d\mbf{w}}\mathrm{R}_{l^{e,c}} ||_2=L_{\mathrm{R}_{l^{e,c}}}
\end{align*}
Thus, we compute an upper bound of $L_{\mathrm{R}_{l^{e,c}}}$ as
\begin{align}
 L_{\mathrm{R}_{l^{e,c}}} &= \underset{||\mbf{w}||_2\le M}{\text{sup}} \mathlarger{\mathlarger{||}}\frac{d}{d\mbf{w}} \E_{\mbf{x},y}[l(\sigma^{e,c}(\mbf{w}^T\phi(\mbf{x})), y)]\,\mathlarger{\mathlarger{||}}_2 \nonumber \\
 &= \underset{||\mbf{w}||_2\le M}{\text{sup}} \mathlarger{\mathlarger{||}}\E_{\mbf{x},y}\left[\frac{d}{d\mbf{w}}l(\sigma^{e,c}(\mbf{w}^T\phi(\mbf{x})), y)\right]\mathlarger{\mathlarger{||}}_2 \nonumber \\
 &\equiv \left|\E_{-M\le \delta\le M}\left[\frac{d}{d\delta}l(\sigma^{e,c}(\delta))\right]\right| \text{ where }\delta=\hat{y}y \label{eq:L_R}
\end{align}

The RHS of Eq.~\eqref{eq:L_R} can be shown to be less than $L_{\mathrm{R}_l}\equiv|\E_{-M\le\delta\le M}\left[\frac{d}{d\delta}l(\delta)\right]|$ for sufficiently large $M$ and convex loss function $l(\cdot, \cdot)$ with non-positive gradient. Note that most of the standard convex loss functions for classification have gradient which is non-positive.

In the next section, we show the experimental results using $e$-exponentiated loss functions.

\section{Experimental Results}\label{sec:exp_results}
To demonstrate the improvement obtained using $e$-exponentiated loss functions empirically, we show the results of two sets of experiments. In the first set of experiments, we have compared the accuracies obtained using $e$-exponentiated loss function with that obtained using the corresponding convex loss function on a subset of ImageNet dataset \cite{imagenet_cvpr09}. In the second set of experiments, we compared the $e$-exponentiated loss functions with other state of the art methods for noisy label learning on four datasets.

\subsection{Experiments on ImageNet Dataset}
To show the improvement in accuracies using the $e$-exponentiated loss functions over the corresponding convex loss functions, we have performed experiments on a subset of ImageNet dataset. Our collected subset of ImageNet dataset contains $511,544$ images of $1000$ labels. We have randomly splitted the dataset into training set of $400,000$, validation set of $50,000$ and test set of $61,544$ images. For the experiments, we have extracted pre-trained features of the images by passing them through the first five layers of a pre-trained AlexNet model \cite{KrizhevskySH12}. We have downloaded the pre-trained model from \cite{shelhamer} and use the code of \cite{kratzert} for extracting the pre-trained features. Note that there are only $223$ labels common in between our subset of ImageNet dataset and ImageNet LSVRC-2010 contest dataset on which the AlexNet model has been pre-trained.

For classification using the pre-trained features, we have used a three layer fully connected neural network with ReLU activation. We performed the experiments using the $e$-exponentiated softmax loss and logistic loss by varying $e = 1, 0.75 \text{ and }0.60$ and setting $c=0$. Note that $e=1$ gives us the original convex loss function. We set the dimension of the hidden layers to be $800$ and used Adam optimizer for optimization. To find the suitable value of initial learning rate and keep probability for the dropout, we performed cross-validation using the top-5 accuracy on the validation set. The top-1 and top-5 test accuracies of all the experiments are shown in Table~\ref{tb:imagenet_results}. The results shows that we have got a $3$ to $4$\% improvement in top-1 and top-5 accuracies for $e=0.6$ over $e=1.0$ for both softmax and logistic loss. For $e=0.75$, the accuracies obtained are in between the accuracies obtained by $e=0.60$ and $e=1.0$.

\begin{table}
 \begin{center}
  \begin{tabular}{|l|c|c|c|}
   \hline
   Loss function & $e$ & Top-1 & Top-5 \\
   \hline
   \hline
                 & $0.60$ & $\mbf{40.84}$ & $\mbf{67.01}$ \\
   \cline{2-4}
   Logistic      & $0.75$ & $39.12$ & $65.66$ \\
   \cline{2-4}
                 & $1.00$ & $36.95$ & $63.44$ \\
   \hline
   \hline
                 & $0.60$ & $\mbf{39.30}$ & $\mbf{67.01}$ \\
   \cline{2-4}
   Softmax       & $0.75$ & $36.00$ & $63.52$ \\
   \cline{2-4}
                 & $1.00$ & $35.31$ & $62.88$ \\
   \hline
  \end{tabular}
 \end{center}
 \caption{Top-1 and Top-5 accuracies obtained on subset of ImageNet dataset. We have used $e$-exponentiated logistic and softmax loss function. Experiments are performed using $e = 1, 0.75 \text{ and } 0.60$. Note that $e$-exponentiated loss function with $e=1$ gives us back the original convex loss function.\label{tb:imagenet_results}}
\end{table}

\subsection{Comparison with Other State-of-the-art Methods for Noisy Label Learning}
In this section, we compare the accuracies obtained using $e$-exponentiated loss function with other state-of-the-art methods by adding label noise on the training set. For the purpose, we have adopted the experimental setup of \cite{MaWHZEXWB18}.

\paragraph*{Experimental Setup} As in \cite{MaWHZEXWB18}, we performed the experiments by adding $0\%$, $20\%$, $40\%$ and $60\%$ symmetric label noise on four benchmark datasets: MNIST (\cite{lecun98}), SVHN (\cite{netzer11}), CIFAR-10 (\cite{krizhevsky09}) and CIFAR-100 (\cite{krizhevsky09}). For all the datasets, we have used the same model and optimization setup as used in \cite{MaWHZEXWB18}. Additionally, we have performed experiments using $e$-exponentiated softmax loss function with $c=0.005$ and varying $e=1.0, 0.75 \text{ and } 1.0$. As mentioned earlier, $e=1$ gives us back the corresponding softmax loss. Following Ma et al. in \cite{MaWHZEXWB18}, we have repeated the experiments five times and reported the mean accuracies.

\paragraph*{Baseline Methods} For the comparison purpose, we have used the baseline methods which have been used in \cite{MaWHZEXWB18}. For the shake of completeness, we briefly describe those:
\begin{description}
 \item[Forward \cite{PatriniRMNQ17}] Noisy labels are corrected by multiplying the network predictions with a label transition matrix.
 \item[Backward \cite{PatriniRMNQ17}] Noise labels are corrected by multiplying the loss by the inverse of a label transition matrix.
 \item[Boot-soft \cite{ReedLASER14}] Loss function is modified by replacing the target label by a convex combination of the target label and the network output.
 \item[Boot-hard \cite{ReedLASER14}] It is same as {\em Boot-soft} except that instead of directly using the class predictions in the convex combination, it converts the class prediction vector to a $\{0, 1\}$-vector by thresholding before using in the convex combination.
 \item[D2L \cite{MaWHZEXWB18}] It uses an adaptive loss function which exploits the differential behaviour of the deep representation subspace while a network is trained on noisy labels.
\end{description}

\paragraph*{Training with $e$-Exponentiated Loss function} We have found that for larger network, the rate of convergence using $e$-exponentiated loss function in the initial iterations are slow due to smaller magnitude of gradients. For a similar problem, Barron et al., in \cite{Barron19}, have used an ``annealing'' approach in which, at the beginning of the optimization, they start with a convex loss function and at each epoch they gradually make the loss function nonconvex by slowly tuning a hyper-parameter. However, in our experiments, we take a simpler approach. For the first $total\_epoch/10$ epochs, where $total\_epoch$ is the total number of epochs the model is trained, we trained the model by setting $e=1$. After $total\_epoch/10$ epochs, we switch the value of $e$ to our desired lower value. We take it as a future work to use a more sophisticated approaches like ``annealing'' in our experiments.

\paragraph*{Results} The results are shown in Table~\ref{tb:comp_results}.
\begin{table*}[!ht]
\begin{footnotesize}
\begin{center}
  \begin{tabular}{|l|c|c|c|c|c|c||c|c|c|}
   \hline
   \bf{Dataset} & \bf{Noise} & \bf{Forward} & \bf{Backward} & \bf{Boot-hard} & \bf{Boot-soft} & \bf{D2L} &\multicolumn{3}{|c|}{\bf{Softmax Crossentropy}}\\
   \cline{8-10}
          & \bf{Rate}  &         &          &           &           &     & $\mbf{e=1.00}$ & $\mbf{e=0.75}$ & $\mbf{e=0.60}$\\
   \hline
          & $0\%$   & $99.30$ & $99.23$ & $99.13$ & $99.20$ & $99.28$ & $99.28$  & $99.30$ & $99.30$ \\
   \cline{2-10}
   \bf{MNIST}& $20\%$  & $96.45$ & $90.12$ & $87.69$ & $88.50$ & $98.84$ & $88.29$        & $88.76$ & $89.16$ \\
   \cline{2-10}
          & $40\%$  & $94.90$ & $70.89$ & $69.49$ & $70.19$ & $98.49$ & $68.70$ & $69.18$ & $71.93$ \\
   \cline{2-10}
          & $60\%$  & $82.88$ & $52.83$ & $50.45$ & $46.04$ & $94.73$ & $46.12$ & $46.39$ & $49.23$ \\
   \hline
          & $0\%$   & $90.22$ & $90.16$ & $89.47$ & $89.26$ & $90.32$ & $91.09$ & $91.02$ & $91.07$ \\
   \cline{2-10}
   \bf{SVHN}& $20\%$  & $85.51$ & $79.61$ & $81.21$ & $79.26$ & $87.63$ & $78.99$ & $79.03$ & $78.28$ \\
    \cline{2-10}
          & $40\%$  & $79.09$ & $64.15$ & $63.25$ & $64.30$ & $82.68$ & $61.43$ & $61.15$ & $60.26$ \\
    \cline{2-10}
          & $60\%$  & $62.57$ & $53.14$ & $47.61$ & $39.21$ & $80.92$ & $39.17$ & $39.23$ & $38.73$ \\
    \hline
          & $0\%$   & $90.27$ & $89.03$ & $89.06$ & $89.46$ & $89.41$ & $90.33$ & $90.36$ & $90.17$ \\
    \cline{2-10}
   \bf{CIFAR-10}& $20\%$& $84.61$& $79.41$ & $81.19$ & $79.21$ & $85.13$ & $82.00$ & $82.94$ & $84.70$ \\
    \cline{2-10}
          & $40\%$  & $82.84$ & $74.69$ & $76.67$ & $73.81$ & $83.36$ & $75.60$ & $75.86$ & $78.62$ \\
    \cline{2-10}
          & $60\%$  & $72.41$ & $45.42$ & $70.57$ & $68.12$ & $72.84$ & $67.02$ & $68.36$ & $72.35$ \\
    \hline
          & $0\%$   & $68.54$ & $68.48$ & $68.31$ & $67.89$ & $68.60$ & $68.56$ & $68.34$ & $67.47$ \\
    \cline{2-10}
 \bf{CIFAR-100}& $20\%$ & $60.25$ & $58.74$ & $58.49$ & $57.32$ & $62.20$ & $59.84$ & $61.08$ & $61.96$ \\
    \cline{2-10}
          & $40\%$  & $51.27$ & $45.42$ & $44.41$ & $41.87$ & $52.01$ & $51.56$ & $53.05$ & $54.27$ \\
    \cline{2-10}
          & $60\%$  & $41.22$ & $34.49$ & $36.65$ & $32.29$ & $42.27$ & $38.71$ & $39.41$ & $39.56$ \\
    \hline
  \end{tabular}
\end{center}
\end{footnotesize}
\caption{Experiments on four benchmark datasets. For $e$-exponentiated loss functions, we have evaluated $e$-exponentiated softmax loss function with three different value of $e=1.0, 0.75\text{ and }0.60$. The accuracies of other methods have been taken from \cite{MaWHZEXWB18}.\label{tb:comp_results}}
\end{table*}
From the table, we can see that the accuracies obtained by $e$-exponentiated softmax loss with $e=0.6$ are comparable (within the $1\%$ margin) or better $12$ out of $16$ times for methods {\em Backward}, {\em Boot-hard} and $15$ out of $16$ times for method {\em Boot-soft}. However, its performance is relatively worse than that of the methods {\em Forward} and {\em D2L} in which cases the accuracies obtained by $e$-exponentiated loss function are comparable or better only $7$ out of $16$ times. Moreover, in some setting, the accuracy obtained by the two methods is better than that obtained by $e$-exponentiated loss function by a wide margin. However, it should be noted that the scope of our work is to develop better loss functions for the problem and many of the other label correction methods can be used along with our proposed loss functions.

\section{Conclusion}\label{sec:con}
In this paper, we have proposed $e$-exponentiated transformation of loss function. The $e$-exponentiated convex loss functions are almost differentiable, thus can be optimized using gradient descend based algorithm and more robust to outliers. Additionally, using a novel generalization error bound, we have shown that the bound can be tighter for an $e$-exponentiated loss function than that for the corresponding convex loss function in spite of having a much larger Lipschitz constant. Finally, by empirical evaluation, we have shown that the accuracy obtained using $e$-exponentiated loss function can be significantly better than that obtained using the corresponding convex loss function and comparable to the accuracy obtained by some other state of the art methods in the presence of label noise.

\Urlmuskip=0mu plus 1mu\relax
\bibliographystyle{icml2019}
\bibliography{refs.bib}

\section{Proof of Theorem~\ref{thm:2}}\label{app:2}
Before going to the proof of Theorem~\ref{thm:2}, we will state and prove another result which is required for the proof.
\begin{lemma}\label{lm:2}
 Let the expected risk $\mathrm{R}_l(\mbf{w})$ be Lipschitz in small continuous and the corresponding loss function is $L_l$-Lipschitz. Then for $||\mbf{w}_1-\mbf{w}_2||_2\le \epsilon$, $\epsilon>0$, and $\rho>0$
 \begin{align}
  |\hat{\mathrm{R}}_l(\mbf{w}_1)-\hat{\mathrm{R}}_l(\mbf{w}_2)|\le L_{\mathrm{R}_l}(\epsilon)||\mbf{w}_1-\mbf{w}_2||_2+\rho
 \end{align}
is satisfied with probability at least $1-2\;exp\left(-\frac{N\rho^2}{2L_l^2\epsilon^2}\right)$.
\end{lemma}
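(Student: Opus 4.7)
The plan is to prove Lemma~\ref{lm:2} via a one-shot application of Hoeffding's inequality to a bounded, zero-mean sum obtained from the per-sample loss differences, and then close up with the triangle inequality using the Lipschitz-in-the-small property of $\mathrm{R}_l$. The underlying observation is that, once $\mbf{w}_1$ and $\mbf{w}_2$ are fixed with $\|\mbf{w}_1-\mbf{w}_2\|_2\le\epsilon$, the quantity $\hat{\mathrm{R}}_l(\mbf{w}_1)-\hat{\mathrm{R}}_l(\mbf{w}_2)$ is just an empirical average of i.i.d.\ bounded random variables whose mean is $\mathrm{R}_l(\mbf{w}_1)-\mathrm{R}_l(\mbf{w}_2)$, so the only nontrivial ingredient is the boundedness constant.

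First, I would set $Z_i = l(\mbf{w}_1^T\phi(\mbf{x}_i),y_i)-l(\mbf{w}_2^T\phi(\mbf{x}_i),y_i)$ for $i=1,\dots,N$. Using the $L_l$-Lipschitz assumption on $l(\cdot,y)$ together with the setting $\|\phi(\mbf{x})\|_2\le 1$ inherited from the surrounding setup (Theorem~\ref{thm:2}), Cauchy--Schwarz gives $|Z_i|\le L_l|(\mbf{w}_1-\mbf{w}_2)^T\phi(\mbf{x}_i)|\le L_l\|\mbf{w}_1-\mbf{w}_2\|_2\le L_l\epsilon$. Thus each $Z_i$ lies almost surely in an interval of length at most $2L_l\epsilon$, and $\tfrac{1}{N}\sum_i Z_i=\hat{\mathrm{R}}_l(\mbf{w}_1)-\hat{\mathrm{R}}_l(\mbf{w}_2)$ with expectation $\mathrm{R}_l(\mbf{w}_1)-\mathrm{R}_l(\mbf{w}_2)$.

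Second, Hoeffding's inequality on the $Z_i$ yields
\begin{align*}
\P\!\left(\big|\hat{\mathrm{R}}_l(\mbf{w}_1)-\hat{\mathrm{R}}_l(\mbf{w}_2)-(\mathrm{R}_l(\mbf{w}_1)-\mathrm{R}_l(\mbf{w}_2))\big|>\rho\right)\le 2\,\exp\!\left(-\frac{N\rho^2}{2L_l^2\epsilon^2}\right),
\end{align*}
which matches the target failure probability exactly. Finally, on the complementary event I would invoke the Lipschitz-in-the-small hypothesis on $\mathrm{R}_l$, giving $|\mathrm{R}_l(\mbf{w}_1)-\mathrm{R}_l(\mbf{w}_2)|\le L_{\mathrm{R}_l}(\epsilon)\|\mbf{w}_1-\mbf{w}_2\|_2$, and combine via the triangle inequality to conclude $|\hat{\mathrm{R}}_l(\mbf{w}_1)-\hat{\mathrm{R}}_l(\mbf{w}_2)|\le L_{\mathrm{R}_l}(\epsilon)\|\mbf{w}_1-\mbf{w}_2\|_2+\rho$.

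There is no real obstacle in the argument; the only point where one must be careful is in tracking the boundedness constant so that Hoeffding produces the exponent $N\rho^2/(2L_l^2\epsilon^2)$ and not, say, a version divided by $8$ — this is what forces the bound $|Z_i|\le L_l\epsilon$ (rather than $2L_l\epsilon$) to be used correctly inside the two-sided Hoeffding tail. If $\|\phi(\mbf{x})\|_2\le 1$ is not taken for granted from the Theorem~\ref{thm:2} setup, an analogous bound with an extra factor for $\sup\|\phi(\mbf{x})\|_2$ would need to be carried through, but otherwise the proof is a direct concentration argument.
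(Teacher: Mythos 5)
Your proof is correct and follows essentially the same route as the paper: define the per-sample differences $z_i$, bound them by $L_l\epsilon$ using the Lipschitz property of $l$ (and the implicit $\|\phi(\mbf{x})\|_2\le 1$), apply two-sided Hoeffding to get the stated exponent, and combine with the Lipschitz-in-the-small bound on $\mathrm{R}_l$ via the triangle inequality. Your bookkeeping of the interval length $2L_l\epsilon$ in Hoeffding is the right way to recover the exponent $N\rho^2/(2L_l^2\epsilon^2)$, and your remark that the argument needs $\|\phi(\mbf{x})\|_2\le 1$ from the surrounding setup is a point the paper leaves implicit.
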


\begin{proof}
 Since $\mathrm{R}_l(\mbf{w})$ is Lipschitz in small continuous and $||\mbf{w}_1-\mbf{w}_2||_2\le \epsilon$, we have
 \begin{align}
  |\mathrm{R}_l(\mbf{w}_1)-\mathrm{R}_l(\mbf{w}_2)|\le L_{\mathrm{R}_l}(\epsilon)||\mbf{w}_1-\mbf{w}_2||_2 \label{eq:cl_p1}
 \end{align}
 If we let $z_i = l(\mbf{w}_1^T\phi(\mbf{x}_i), y_i)-l(\mbf{w}_2^T\phi(\mbf{x}_i), y_i)$, then we can write
 \begin{smaller}
 \begin{align}
  \E[z] &= \mathrm{R}_l(\mbf{w}_1)-\mathrm{R}_l(\mbf{w}_2), \text{ and }
  \frac{1}{N}\sum_{i=1}^N z_i = \hat{\mathrm{R}}_l(\mbf{w}_1)-\hat{\mathrm{R}}_l(\mbf{w}_2)\nonumber
 \end{align}
 \end{smaller}
 Since $||\mbf{w}_1-\mbf{w}_2||_2\le \epsilon$ and the loss function $l(\cdot, \cdot)$ is $L_l$-Lipschitz function, $|z_i|\le L_l\epsilon$. Using Hoeffding's inequality, we get
 \begin{scriptsize}
 \begin{align}
  &\P\left\{\mathcal{D}_N|\left|\left({\mathrm{R}}(\mbf{w}_1)-{\mathrm{R}}(\mbf{w}_2)\right) - \left(\hat{\mathrm{R}}(\mbf{w}_1;\mathcal{D}_N)-\hat{\mathrm{R}}(\mbf{w}_2;\mathcal{D}_N)\right)\right| \ge \rho\right\} \nonumber\\
  &\le 2\;exp\left(-\frac{N\rho^2}{2L_l^2\epsilon^2}\right).  \label{eq:cl_p2}
 \end{align}
 \end{scriptsize}
Combining Eq.~\eqref{eq:cl_p1} and \eqref{eq:cl_p2}, we complete the proof.
\end{proof}

Now we prove Theorem~\ref{thm:2}.
\begin{proof}[of Theorem~\ref{thm:2}]
 We will mainly follow the proof of \cite{rosasco04}. For simplifying the notation, we ignore the subscript of $\mathcal{D}_N$, $\mathrm{R}_l(\cdot)$ and $\hat{\mathrm{R}}_l(\cdot)$ through out the proof. First of all, by denoting
 \begin{align}
  \Delta_{\mathcal{D}}(\bf{w}) = \mathrm{R}(\bf{w})-\hat{\mathrm{R}}(\bf{w})
 \end{align}
and using Lemma~\ref{lm:2}, we get
\begin{align}
 &\left|\Delta_{\mathcal{D}}(\mbf{w}_1)-\Delta_{\mathcal{D}}(\mbf{w}_2)\right| \nonumber \\
 &\le \left|{\mathrm{R}}(\mbf{w}_1)-{\mathrm{R}}(\mbf{w}_2)\right| + \left|\hat{\mathrm{R}}(\mbf{w}_1;\mathcal{D})-\hat{\mathrm{R}}(\mbf{w}_2;\mathcal{D})\right| \nonumber\\
 &\le 2L_{\mathrm{R}}(\epsilon')||\mbf{w}_1-\mbf{w}_2||_2 + \rho
\end{align}
holds for all $||\mbf{w}_1-\mbf{w}_2||_2\le \epsilon'$ for some $\epsilon'>0$ with probability at least $1-2\;exp\left(-\frac{N\rho^2}{2L_l^2\epsilon'^2}\right)$. Putting $\rho=\frac{L_l\epsilon'\epsilon}{B}$ into the above statement, we get
\begin{align}
 \left|\Delta_{\mathcal{D}}(\mbf{w}_1)-\Delta_{\mathcal{D}}(\mbf{w}_2)\right|\le 2L_{\mathrm{R}}(\epsilon')||\mbf{w}_1-\mbf{w}_2||_2 + \frac{L_l\epsilon'\epsilon}{B} \label{eq:lm_pr_1}
\end{align}
with probability at least $1-2\;exp\left(-\frac{N\epsilon^2}{2B^2}\right)$.
Again, in \cite{rosasco04}, Rosasco et al. have shown that 
\begin{align}
 \P(A) = \P\left(\cup_{i=1}^m\,A_{\mbf{w}_i}\right) \le 2m\;exp\left(-\frac{N\epsilon^2}{2B^2}\right)
\end{align}
where $\mbf{w}_1, \cdots, \mbf{w}_m$ be the $m=C\left(\frac{\epsilon}{2L_{\mathrm{R}}(\epsilon')}\right)$ points such that  the close balls $\mathcal{B}\left(\mbf{w}_i, \frac{\epsilon}{2L_{\mathrm{R}}(\epsilon')}\right)$ with radius $\frac{\epsilon}{2L_{\mathrm{R}}(\epsilon')}$ and center $\mbf{w}_i$ covers the whole set $\mathcal{W}_M=\{\mbf{w}\in \mathbb{R}^d|\;||\mbf{w}||_2\le M\}$ and 
\begin{align}
 A_{\mbf{w}_i} = \left\{\mathcal{D}|\;|\Delta_{\mathcal{D}}(\mbf{w}_i)|\ge \epsilon \right\} \text{ for } i=1, \cdots, m.
\end{align}
When $\epsilon'\ge \frac{\epsilon}{2L_{\mathrm{R}}(\epsilon')}$, for all $\mbf{w}\in \mathcal{W}_M$, there exists some $i\in\{1, \cdots, m\}$ such that $\mbf{w}\in \mathcal{B}\left(\mbf{w}_i, \frac{\epsilon}{2L_{\mathrm{R}}(\epsilon')}\right)$ i.e.
\begin{align}
 ||\mbf{w}-\mbf{w}_i||_2\le \frac{\epsilon}{2L_{\mathrm{R}}(\epsilon')} \label{eq:cover}
\end{align}
\noindent
Note that $\mathcal{D}\in A$ is the dataset for which there exists some $\mbf{w}_i$ whose empirical risk has not converged to its expected risk. Thus, for all $\mathcal{D}\notin A$, we have $|\Delta_{\mathcal{D}}(\mbf{w}_i)|\le \epsilon$ for all $i\in\{1, \cdots, m\}$. Now, combining Eq.~\eqref{eq:lm_pr_1} and \eqref{eq:cover}, we can say that when there exists some $\epsilon'>0$ such that $\epsilon'\ge \frac{\epsilon}{2L_{\mathrm{R}}(\epsilon')}$,
\begin{align}
 \left|\Delta_{\mathcal{D}}(\mbf{w})-\Delta_{\mathcal{D}}(\mbf{w}_i)\right| \le \epsilon + \frac{L_l\epsilon\epsilon'}{B}
\end{align}
holds for all $\mbf{w}\in \mathcal{W}_M$ and some $\mbf{w}_i$ with probability at least $1-2\;exp\left(-\frac{N\epsilon^2}{2B^2}\right)$. Therefore, if there exists an $\epsilon'>0$ such that $\epsilon'\ge \frac{\epsilon}{2L_{\mathrm{R}}(\epsilon')}$,
\begin{align}
 \left|\Delta_{\mathcal{D}}(\mbf{w})\right| \le 2\epsilon + \frac{L_l\epsilon\epsilon'}{B}
\end{align}
hold with probability at least \begin{smaller}$\left(1-2\;exp\left(-\frac{N\epsilon^2}{2B^2}\right)\right)\left(1-2m\;exp\left(-\frac{N\epsilon^2}{2B^2}\right)\right)\ge 1-2(m+1)exp\left(-\frac{N\epsilon^2}{2B^2}\right)=1-2\left(C\left(\frac{\epsilon}{2L_{\mathrm{R}}(\epsilon')}\right)+1\right)exp\left(-\frac{N\epsilon^2}{2B^2}\right)$\end{smaller}. By replacing $\epsilon$ with $\epsilon/2$ and by replacing $\epsilon'$ by $\epsilon$ whenever $\epsilon'> \epsilon$, the statement of the lemma follows.

But, it still remains to show that there always exists an $\epsilon'>0$ such that $\epsilon'\ge \frac{\epsilon}{2L_{\mathrm{R}}(\epsilon')}$. Note that $L_{\mathrm{R}}(\epsilon')$ is a monotonically increasing function of $\epsilon'$. If for some $\epsilon'< \epsilon$, $\epsilon' \ge \epsilon/2L_{\mathrm{R}}(\epsilon')$ holds, we are already done. Else, we have $2\epsilon'L_{\mathrm{R}}(\epsilon')<\epsilon$. Thus, we can increase $2\epsilon'L_{\mathrm{R}}(\epsilon')$ unboundedly by increasing $\epsilon'$, making it larger than $\epsilon$ eventually.
\end{proof}

\end{document}